\DeclareMathOperator*{\argmin}{arg\,min}
\DeclareMathOperator*{\argmax}{arg\,max}
\newcommand{\norm}[1]{\left\lVert#1\right\rVert}
\DeclareMathOperator{\R}{\mathbb{R}}
\newcommand{\card}[1]{\left\vert#1\right\vert}
\def\*#1{\mathbf{#1}}
\def\+#1{\mathcal{#1}}
\newtheorem{lemma}{Lemma}
\title{\LARGE \bf Distributed Multi-Target Tracking for Autonomous Vehicle Fleets}
\author{Ola Shorinwa$^{1}$, Javier Yu$^{2}$, Trevor Halsted$^{1}$, Alex Koufos$^{2}$, and Mac Schwager$^{2}$
\thanks{*This project was funded in part by DARPA YFA award D18AP00064, NSF NRI award 1830402.  Toyota Research Institute (``TRI'') provided funds to assist the authors with their research but this article solely reflects the opinions and conclusions of its authors and not TRI or any other Toyota entity.  The second author was funded on an NSF GRF, and the third on an NDSEG Fellowship.}
\thanks{$^{**}$ The first three authors contributed equally.}%
\thanks{$^{1}$Department of Mechanical Engineering, Stanford University, Stanford, CA 94305, USA, {\tt\small \{shorinwa, halsted\}@stanford.edu}}%
\thanks{$^{2}$Department of Aeronautics and Astronautics, Stanford University, Stanford, CA 94305, USA
        {\tt\small \{javieryu, akoufos, schwager\}@stanford.edu}}%
}
\begin{document}

\maketitle
\thispagestyle{empty}
\pagestyle{empty}

\begin{abstract}
We present a scalable distributed target tracking algorithm based on the alternating direction method of multipliers that is well-suited for a fleet of autonomous cars communicating over a vehicle-to-vehicle network.  Each sensing vehicle communicates with its neighbors to execute iterations of a Kalman filter-like update such that each agent's estimate approximates the centralized \textit{maximum a posteriori} estimate without requiring the communication of measurements. We show that our method outperforms the Consensus Kalman Filter in recovering the centralized estimate given a fixed communication bandwidth.  We also demonstrate the algorithm in a high fidelity urban driving simulator (CARLA), in which 50 autonomous cars connected on a time-varying communication network track the positions and velocities of 50 target vehicles using on-board cameras.
 
\end{abstract}

\section{Introduction}
A key challenge in integrating autonomous vehicles into the transportation infrastructure is ensuring their safe operation in the presence of potential hazards, such as human-operated vehicles and pedestrians. However, tracking the paths of these safety-critical targets using on-board sensors is difficult in urban environments due to the presence of occlusions. Collaborative estimation among networked autonomous vehicles has the potential to alleviate the limitations of each vehicle's individual perception capabilities. Networked fleets of autonomous vehicles operating in urban environments can collectively improve the safety of their planning and decision-making by collaboratively tracking the trajectories of nearby vehicles in real-time.

Constraints on communication and computation impose fundamental challenges on collaborative tracking.  Given limited communication bandwidth, information communicated between vehicles must be succinct and actionable. Communication channels must also be free to form and dissolve responsively given the highly dynamic nature of urban traffic. Relying on centralized computation is neither robust to single points of failure, nor communication-efficient in disseminating information to those vehicles to whom it is relevant.  Rather, a fully-distributed scheme that exploits the computational and communication resources of an autonomous fleet is crucial to reliable tracking.

\begin{figure}[t]
    \centering
    \includegraphics[width=0.65\linewidth]{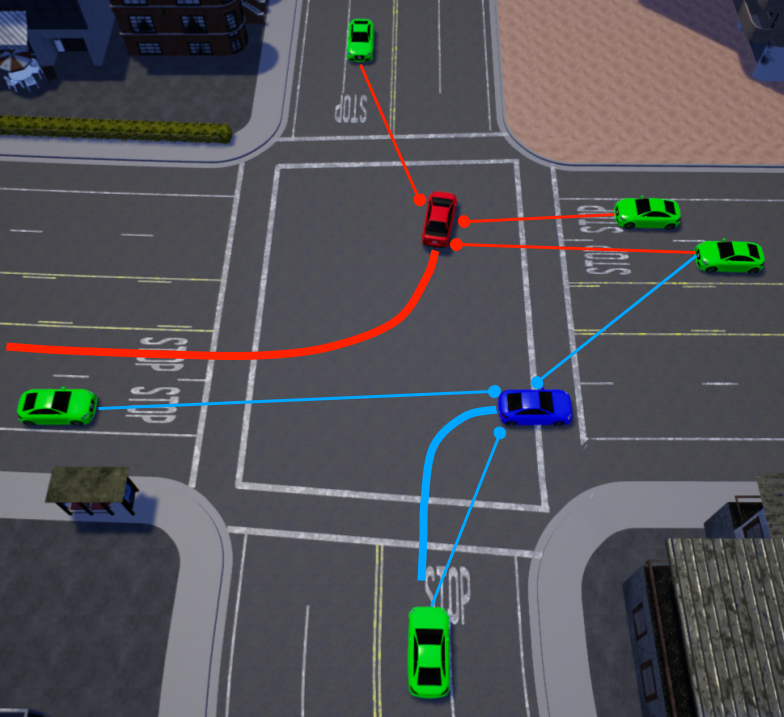}
    \caption{Autonomous vehicles (in green) track the trajectory of target vehicles (in blue and red) with images from on-board cameras at a four-way intersection using our algorithm. }
    \label{fig:motivator}
\end{figure}

In this paper, we consider the problem of distributed target tracking in a fleet of vehicles collaborating over a dynamic communication network, posed as a Maximum A Posteriori (MAP) optimization problem. Our key contribution is a scalable Distributed Rolling Window Tracking (DRWT) algorithm derived from the Alternating Direction Method of Multipliers (ADMM) distributed optimization framework.  The algorithm consists of closed-form algebraic iterations reminiscent of the Kalman filter and Kalman smoother, but guarantees that the network of vehicles converge to the centralized MAP estimate of the targets' trajectories over a designated sliding time window.  We show in extensive simulations that our DRWT algorithm converges to the centralized estimate orders of magnitude faster than a state-of-the art Consensus Kalman Filter for the same bandwidth.  We demonstrate our algorithm in a realistic urban driving scenario in the CARLA simulator, in which 50 autonomous cars track 50 target vehicles in real time using only segmented images from their on-board cameras.

The paper is organized as follows.  We give related work in Sec.~\ref{sec:related_works} and pose the distributed estimation problem in Sec.~\ref{sec:problem}. In Sec.~\ref{sec:centralized estimation}, we formulate the centralized MAP optimization problem, and we derive our DRWT algorithm in Sec.~\ref{sec:distributed_estimation}.  Sec.~\ref{sec:simulations} presents results comparing our DRWT to the Consensus Kalman Filter, and describes large-scale simulations in a CARLA urban driving scenario.

\section{Related Work}
\label{sec:related_works}
Several approaches have previously been applied to solving distributed estimation problems. In distributed filtering methods, consensus techniques enable the asymptotic diffusion of information throughout the communication network, allowing individual computation nodes to approximate the joint estimate in the Consensus Kalman Filter \cite{olfati2005distributed, olfati2007distributed, olfati2009kalman, battistelli2016stability}. Alternatively, using finite consensus techniques can improve communication efficiency \cite{wu2018distributed}. Similar techniques have also been applied to particle filtering \cite{ ong2006decentralised, ahmad2011multi}.  However, the messages communicated in these consensus-based methods contain both information vectors and information matrices, so the communication cost scales superlinearly with the size of the estimate. Our approach recovers the same centralized solution while only communicating the estimate vector.

Sensor fusion techniques accomplish distributed estimation by computing a centralized approximation given individual estimates throughout the network \cite{carli2008distributed}. A key challenge in sensor fusion is keeping track of cross-correlation in conditioning individual estimates on previously-fused centralized estimates \cite{stroupe2001distributed}. Covariance Intersection (CI) addresses this issue by computing a consistent centralized estimate that accounts for any possible cross-correlation between individual estimates  \cite{niehsen2002information, julier2007using, li2013cooperative, noack2017decentralized}. However, in ensuring consistency, CI is often extremely conservative and therefore significantly suboptimal, especially for large networks.

Other estimation techniques approach distributed estimation using optimization. One approach is to aggregate all observations of each target to form a non-linear least squares objective function which recovers the MAP estimate \cite{ahmad2013cooperative}, though such an approach requires all-to-all communication. In \cite{nerurkar2009distributed}, each robot communicates its measurement and state estimate to its neighbors to solve the MAP least-squares problem using the conjugate gradient method. However, this approach still requires each node to communicate its measurements to its neighbors.  Alternatively, some methods have been proposed to divide targets among the trackers using Voronoi partitions \cite{dames2017detecting}, and to track multiple targets using the Probability Hypothesis Density (PHD) filter \cite{dames2017distributed}. 

In this paper, we apply a novel approach to the problem of target tracking.  We pose target tracking as a MAP estimate over a rolling window that bears some similarity to \cite{sibley2006sliding}. We apply ADMM, a technique that allows for distributed optimization of problems with separable objectives, to distribute the resulting MAP optimization problem (see \cite{boyd, mateos} for a detailed survey of ADMM). This approach guarantees convergence to the centralized solution \cite{rockafellar}.

\section{Problem Formulation}
\label{sec:problem}
\subsection{Communication model}
We consider the scenario of $N$ camera-equipped autonomous vehicles (``sensors'') navigating a city that also contains $M$ other vehicles (``targets'').  Each sensor takes measurements of the positions of the targets in its vicinity and can communicate with other nearby sensors.
We model the communication network among the $N$ sensors at time $t$ as a dynamic undirected graph $\+G_t = (\+V, \+E_t)$, with vertices $\+V = \{1, \dots, N\}$ corresponding to sensors and edges $\+E_t$ containing pairs of sensors that can directly share information with each other.  The presence of an edge $(i, j)$ depends on the proximity between sensors $i$ and $j$ at time $t$. The neighbor set ${\+N_{i, t} = \{j \mid (i, j) \in \+E_t\}}$ consists of sensors $j$ that can communicate with sensor $i$ at time $t$.

\subsection{Target assignment}
We assume that the each target in the environment has a unique identifier known to all sensors. This data association task is addressed in \cite{montijano2013distributed}, and can be performed in a completely distributed fashion. 

The set of sensors observing any given target changes due to occlusions coupled with the limited sensing-range of the cameras.  At each time that a sensor observes one or more targets, it generates a set of features for each target (\cite{manzoor2019real}, \cite{hsieh2014symmetrical}, \cite{du2012automatic}) which identify the target. This identifier is communicated to its neighbors. Considering the case of a particular target, we denote the set of sensors that observe it over the time horizon $[t-T, t]$ as $\+W_t$.
The subgraph of sensors that are relevant to the target in the time horizon is $\+G^\prime_t \subseteq \+G_t$, such that $\+V^\prime_t = \+V \cap \+W_t$ and $\+E^\prime_t = \{(i, j) \mid (i,j) \in \+E_t, i, j \in \+V^\prime_t\}$. Sensor $i$ knows that sensor $j$ belongs to $\+V^\prime_t$ since sensor $j$ communicates a descriptor of each observed target.  We assume that the subgraph $\+G^\prime_t$ is connected at all times $t$ (that is, there exists a set of edges that form a path between any $i, j \in \+V^\prime_t$).

\subsection{Distributed estimation}
Given a particular target, each sensor has the task of estimating the target's state $\*x \in \R^n$ which includes its position and velocity over discrete timesteps modeled
as a linear Gaussian system in which
\begin{equation}
\*x_{t+1} = \*A_{t} \*x_{t} + \*w_{t}, \label{eqn:dynamics}
\end{equation}
with linear dynamics $\*A_{t} \in \R^{n \times n}$ and additive noise ${\*w_{t} \sim \+N(\*0, \*Q_{t}) \in \R^n}$. In the following, we represent the trajectory over the time horizon $[t-T, t]$ using the notation ${\*x_{t-T:t} = \begin{bmatrix}
\*x_{t-T}^\top & \cdots & \*x_t^\top
\end{bmatrix}^\top}$. Sensor $i$ makes an observation of the target at time $t$ according to
\begin{equation}\*y_{i, t} = \*C_{i, t} \*x_t + \*v_{i, t}, \label{eqn:ind_measurements}
\end{equation}
with measurement vector ${\*y_{i, t} \in \R^{m_i}}$, measurement matrix $\*C_{i, t} \in \R^{m_i \times n}$, and additive noise ${\*v_{i, t} \sim \+N(\*0, \*R_{i, t}) \in \R^{m_i}}$.  We also refer to the joint set of observations across all sensors in the network as
\begin{equation}
\*y_t = \*C_t \*x_t + \*v_t, \label{eqn:joint_measurements}
\end{equation}
where the joint variables $\*y_t \in \R^m$, $\*C_t \in \R^{m \times n}$, and ${\*v_t \in \R^m}$ are the column-wise concatenations over all $i \in \+V^\prime_t$, of $\*y_{i,t}$, $\*C_{i,t}$ and $\*v_{i,t}$, respectively.

While the joint measurements \eqref{eqn:joint_measurements} are not available to any single sensing agent, each agent uses its individual measurements \eqref{eqn:ind_measurements} as well as communication with its neighbors to estimate the target's state.  We compare the sensor's estimated mean and covariance with the mean and covariance computed with full knowledge of all measurements. In the \textit{distributed estimation problem}, each sensor seeks to approximate the centralized (best-possible) estimate using only individual measurements and local communication.

\section{Centralized Estimation}
\label{sec:centralized estimation}
The centralized estimate, which is conditioned on all measurements and priors in the network, gives the best estimate of a target's state and therefore represents the best possible performance. The MAP batch estimate maximizes the probability of the estimated target trajectory $\*x_{0:t}$ conditioned on the full set of measurements $\*y_{0:t}$ and a prior of mean $\bar{\*x}_0$ and covariance $\bar{\*P}_0$:
\begin{align}
\hat{\*x}_{0:t} &= \argmax_{\*x_{0:t}} p(\*x_{0:t} \mid \bar{\*x}_0, \*y_{1:t}) \\
&= \argmax_{\*x_{0:t}} p(\*x_0 \mid \bar{\*x}_0)\prod_{\tau = 0}^{t-1} p(\*x_{\tau+1} \mid \*x_{\tau}) \prod_{\tau = 1}^t p(\*y_\tau \mid \*x_\tau). \label{eqn:mapprob}
\end{align}
Given Gaussian conditional probabilities, the posterior in \eqref{eqn:mapprob} is the Gaussian distribution  $\+N(\hat{\*x}_{0:t}, \hat{\*P}_{0:t})$.

In the case of linear Gaussian systems, we can solve \eqref{eqn:mapprob} as a linear system of equations.  However, recursively estimating the trajectory reduces the size of the system of equations, improving computational efficiency. Instead of maximizing ${p(\*x_{t} \mid \bar{\*x}_0, \*y_{0:t})}$, the Kalman filter infers $\hat{\*x}_t$ from the result of the previous timestep's estimate, the prior distribution $(\bar{\*x}_{t-1}, \bar{\*P}_{t-1})$:
\begin{equation}
\hat{\*x}_{t} = \argmax_{\*x_{t}} p(\*x_{t} \mid \bar{\*x}_{t-1}, \*y_{t}).
\end{equation}
However, the Kalman filter only exactly replicates the result of the batch estimate for the final timestep $t$.  For some intermediate $\tau < t$, $\hat{\*x}_\tau$ is conditioned on the full measurement set $\*y_{0:t}$ in the batch approach, but only on $\*y_{0:\tau}$ in the filtering approach.  Employing the Rauch-Tung-Striebel smoother exactly recovers the batch solution by computing $p(\hat{\*x}_{\tau} \mid \hat{\*x}_{\tau+1})$ for ${\tau = t-1, \dots, 0}$ (a backward pass of the trajectory performed after the Kalman filter's forward pass).

For our application of persistently tracking targets, a MAP rolling window approach is appropriate as it incorporates smoothing effects into a single Kalman filter-like update.  The rolling window refers to a time horizon ${[t-T, t]}$ over which we compute the MAP estimate.  Given the prior $(\bar{\*x}_{t-T:t-1}, \bar{\*P}_{t-T:t-1})$, we compute the window's posterior distribution by factoring the original MAP solution as
\begin{align}
\begin{split}\hat{\*x}_{t-T:t} &= \argmax_{\*x_{t-T:t}} \big\{p(\*x_{t-T:t-1} \mid \bar{\*x}_{t-T:t-1}) \\ &  \hspace{4em}  p(\*x_t \mid \*x_{t-1}) p(\*y_t \mid \*x_t)\big\}. \end{split}\label{eqn:rollingwindowdist}
\end{align}

The estimate $\hat{\*x}_{t-T:t}$ is conditioned on $\bar{\*x}_0$, $\*y_{0:t}$ and is equivalent to performing a filtering pass for the times $0, \dots, t$ and a smoothing pass from time $t$ to time $t-T$. We then increment the rolling window forward to ${[t-T+1,  t+1]}$, retaining the estimate $(\hat{\*x}_{t-T+1:t}, \hat{\*P}_{t-T+1:t})$ as the prior for that window.  Therefore, the rolling window approach preserves much of the smoothing effect of the batch estimate while maintaining a constant problem size at each time step.

Applying \eqref{eqn:dynamics} and \eqref{eqn:joint_measurements} to \eqref{eqn:rollingwindowdist} yields
\begin{align}
J\left(\hat{\*x}_{t-T:t}\right) &= \norm{\hat{\*x}_{t} - \*A_{t-1} \hat{\*x}_{t-1}}_{\*Q_{t-1}^{-1}}^2 + \norm{\*y_t - \*C_t \hat{\*x}_t}_{\*R_t^{-1}}^2 \nonumber \\ & \qquad + \norm{\hat{\*x}_{t-T:t-1} - \bar{\*x}_{t-T:t-1}}_{\bar{\*P}_{t-T:t-1}^{-1}}^2. \label{eqn:cent_cost_func}
\end{align} for which the minimizing $\hat{\*x}_{t-T:t}$ is the solution to \eqref{eqn:rollingwindowdist}.

We can express the MAP rolling window estimate as
\begin{align}
\label{eqn:MAP_estimator_soln_mean}
\hat{\*x}_{t-T:t} &= \left(\*H_t^\top \*W_t^{-1} \*H_t\right)^{-1}\*H_t^\top \*W_t^{-1} \*z_t\\
\hat{\*P}_{t-T:t} &= \left(\*H_t^\top \*W_t^{-1} \*H_t\right)^{-1}, \label{eqn:MAP_estimator_soln_cov}
\end{align}
given the block matrices
\[
\*H_t = 
\left[\begin{array}{c c c c}
 \*0 & \cdots & -\*A_{t-1} & \*I \\ \hline
 \*0 & \cdots &  \*0 & \*C_t \\ \hline
\*I & \cdots  & \*0 & \*0 \\
 \vdots & \ddots & \vdots & \vdots \\
\*0& \cdots& \*I & \*0 
\end{array}\right]
= \left[\begin{array}{c}
\*F_t   \\ \*G_t\\ \*\Pi_t
\end{array}\right], \]
\[\*z_t = \left[\begin{array}{c  c c}
\*0^\top  & \*y_t^\top & \bar{\*x}_{t-T:t-1}^\top
\end{array}\right]^\top, \]
\[\*W_t = 
\text{blkdiag}\left(
\*Q_{t-1}, \*R_t, \hat{\*P}_{t-T:t-1}\right).\]

We implement this procedure recursively by retaining the lower-right block of the covariance matrix $\hat{\*P}_{t-T:t}$ as the prior covariance $\bar{\*P}_{t-T+1:t}$ for the next timestep's estimate. The estimate over all but timestep $t-T$ becomes the prior mean $\bar{\*x}_{t-T+1:t}$. Therefore, we have a tractable centralized target tracking method that serves as a benchmark for our distributed target tracking algorithm.

\section{Distributed Estimation}
\label{sec:distributed_estimation}
One typical approach for the distributed implementation of the MAP estimate is to use consensus techniques to diffuse information across the network, enabling each agent to minimize \eqref{eqn:cent_cost_func}.  This is true of Consensus Kalman Filter (CKF) approaches, in which each agent maintains local measurement information \eqref{eqn:ind_measurements} rather than the joint measurements \cite{olfati2007distributed,  olfati2009kalman, battistelli2016stability, olfati2005distributed, wu2018distributed}. The CKF uses asymptotic consensus with Metropolis weights to sum $\*G_{i, t}^\top \*R_{i, t}^{-1} \*G_{i, t}$ and $\*G_{i, t}^\top \*R_{i, t}^{-1} \*y_{i, t}$ over all $i \in \+V_t^\prime$, where ${\*G_{i,t} = \begin{bmatrix} \*0 & \dots & \*0 & \*C_{i, t} \end{bmatrix}}$. The fused observations are then fused with local copies of the dynamics terms and prior terms of the cost function. The consensus rounds diffuse the joint measurement information to each sensor, enabling local computation of \eqref{eqn:MAP_estimator_soln_mean} and \eqref{eqn:MAP_estimator_soln_cov}.

The CKF requires communication of local information matrices and information vectors during consensus, a communication-intensive process that is a drawback of the method.  Furthermore, performing an approximation of the centralized estimate at each node is redundant, failing to take advantage of the distributed nature of the computational resources in the network. In contrast to the CKF, we propose a Distributed Rolling Window Tracking (DRWT) algorithm that uses an ADMM-based approach to enable each sensor to replicate the centralized estimate without reconstructing the centralized cost function.  First, we pose the centralized cost function \eqref{eqn:cent_cost_func} as a separable problem with linear constraints:
\begin{equation}
\begin{split}
\underset{\hat{\*x}_{i, t-T: t} \forall i \in \+V^\prime_t}{\text{minimize}} & \sum_{i \in \+V^\prime_t} \bigg\{\frac{1}{\vert \+V_t^\prime\vert}\norm{\hat{\*x}_{i, t} - \*A_{t-1} \hat{\*x}_{i, t-1}}_{\*Q_{t-1}^{-1}}^2 \bigg. \\ & \quad + \norm{\*y_{i, t} - \*C_{i,t} \hat{\*x}_{i, t}}_{\*R_{i, t}^{-1}}^2  \\ & \bigg. \quad + \norm{\hat{\*x}_{i, t-T:t-1} - \bar{\*x}_{i, t-T:t-1}}_{\bar{\*P}_{i, t-T:t-1}^{-1}}^2\bigg\}
\\ \text{subject to}\quad & \hat{\*x}_{i, t-T:t}  = {\*r_{ij}}  \qquad \forall j \in \+N_{i, t} \\
 & \hat{\*x}_{j, t-T:t} = {\*r_{ij}} \qquad \forall j \in \+N_{i, t},  \label{eqn:dist_prob}
 \end{split}
\end{equation}
for which $\sum_{i \in \+V_t^\prime} \bar{\*P}^{-1}_{i, t-T:t-1} = \bar{\*P}^{-1}_{ t-T:t-1}$. In the following, we express the cost function in \eqref{eqn:dist_prob} as $\sum_{i \in \+V^\prime_t} J_i(\*x_{i, t-T:t})$ and omit the subscript ${t-T:t}$ from the primal variable $\hat{\*x}_i$. The slack variable $\*r_{ij} \in \R^{n(T+1)}$ encodes agreement constraints between neighbors $i$ and $j$.  The ADMM approach to solving problems of this form uses the augmented Lagrangian, which adds to the cost function a quadratic penalty for constraint violations, ${\sum_{i \in \+V_t^\prime}\sum_{j \in\+N_{i,t}}(\rho/2)(\norm{\*x_{i} - {\*r_{ij}}}^2 + \norm{\*x_{j} - {\*r_{ij}}}^2)}$. The augmented problem is equivalent to the original problem as the added penalty is zero for the feasible set of estimates. We find the saddle point of the augmented Lagrangian
\begin{align}
L_\rho &= \sum_{i \in \+V^\prime_t}\left.J_i\left(\*x_{i}\right)\right. 
 + \sum_{j \in \+N_{i, t}} \left(\*\lambda_{ij}^\top\left(\*x_{i} - \*r_{ij}\right) + \*\mu_{ij}^\top\left(\*x_{j} - \*r_{ij}\right) \right) \nonumber \\ &\qquad + \frac{\rho}{2}\sum_{j \in \+N_{i, t}}\left(\norm{\*x_{i} - {\*r_{ij}}}^2 + \norm{\*x_{j} - {\*r_{ij}}}^2\right) \label{eqn:L_rho}
\end{align}
by alternating between minimizing $L_\rho$ with respect to the primal variables $\*x$ and $\*r$ and performing a gradient ascent step on the dual variables $\lambda_{ij}$ and $\mu_{ij}$.  Each $i\in \+V^\prime_t$ can update its respective $\*x_i$, $\*r_{ij}$, $\lambda_{ij}$, and $\mu_{ij}$ for all $j \in \+N_{i, t}$ in parallel since minimizing $L_{\rho}$ with respect to these variables does not depend on the values of its neighbors' variables. Furthermore, as shown in \cite{mateos, chang2014multi}, substituting $\*p_i = \sum_{j \in \+N_{i, t}} \lambda_{ij} + \mu_{ij}$ and assuming the initialization $\*p_{i}^{(0)} = \*0$ yields the minimization of $\*r_{ij}$ as ${\frac{1}{2}( \*x_{i} + \*x_{j}) }$. Initializing ${{\*p}_i^{(0)} = \*0}$ and ${\hat{\*x_i}^{(0)} = \argmin_{\*x_i} J_i(\*x_i)}$, the following iterations alternate between a gradient ascent step on $\*p_i$ and a minimization step on $\*x_i$, converging to the centralized estimate when run in parallel across all $i \in \+V^\prime_t$:
\begin{align}
{\*p}^{(k+1)}_i &= {\*p}_i^{(k)} + \rho\sum_{j \in \+N_{i, t}} \left(\hat{\*x}^{(k)}_{i} - \hat{\*x}^{(k)}_{j}\right) \label{eqn:ADMM_update_p}\\
\begin{split}
\hat{\*x}^{(k+1)}_{i} &= \argmin_{\*x_{i}} \bigg\{J_i\left(\*x_{i}\right) + \*x_{i}^\top {\*p_i^{(k+1)}} \bigg.\\
&\bigg.\qquad + \rho\sum_{j \in \+N_{i, t}}\norm{\*x_{i} - {\frac{1}{2}\left( \hat{\*x}^{(k)}_{i} + \hat{\*x}^{(k)}_{j}\right) }}^2 \bigg\}\label{eqn:ADMM_update_x}
\end{split}
\end{align}
Furthermore, due to our assumption of a linear Gaussian system \eqref{eqn:ADMM_update_x} can be expressed in closed form as \begin{equation}
\begin{split}
&\left(\*H_{i}^\top \*W_{i}^{-1} \*H_{i} + 2\rho \vert \+N_{i} \vert \*I\right)\hat{\*x}^{(k+1)}_{i} = \\
&\qquad \*H_{i}^\top \*W_{i}^{-1} \*z_{i} - \*p_i^{(k+1)} + \rho \sum_{j \in \+N_{i}} \left(\hat{\*x}_{i}^{(k)} + \hat{\*x}_{j}^{(k)}\right).
\end{split} \label{eqn:ADMMupdate}
\end{equation}
using the local versions of the block matrices in \eqref{eqn:MAP_estimator_soln_mean} (replacing $\*C_t$, $\*Q_{t-1}$, $\*R_t$, $\bar{\*P}_{t-T:t-1}$, $\*y_t$, and $\bar{\*x}_{t-T:t-1}$ with $\*C_{i, t}$, $\*Q_t/{\vert \+V^\prime_t\vert}$, $\*R_{i, t}$, $\bar{\*P}_{i, t-T:t-1}$, $\*y_{i, t}$, and $\bar{\*x}_{i, t-T:t-1}$, respectively).
We note  that the matrix inverse in \eqref{eqn:ADMMupdate} only needs to be computed once rather than at every primal update iteration.

\begin{lemma} \label{lem:SP} Given a connected $\+G_t^\prime$ and priors $\bar{\*x}_{i, t-T:t-1}$ and $\bar{\*P}_{i, t-T:t-1}$ such that ${\bar{\*x}_{i, t-T:t-1} = \bar{\*x}_{t-T:t-1}} \:{\forall i \in \+V^\prime_t}$ and ${\sum_{i \in \+V_t^\prime} \bar{\*P}^{-1}_{i, t-T:t-1} = \bar{\*P}^{-1}_{t-T:t-1}}$, there is a saddle point of \eqref{eqn:L_rho} at \begin{align}
\hat{\*x}^{(k)}_{i} &= \hat{\*x}_{t-T:t} \label{eqn:primal_opt}
\\\*p^{(k)}_{i} &= \*H_i^\top \*W_i^{-1} \*z_i - \*H_i^\top \*W_i^{-1} \*H_i\hat{\*x}_{t-T:t} ,\label{eqn:dual_opt} \end{align}
	where $\hat{\*x}_{t-T:t}$ is the centralized MAP rolling window estimate given priors $\bar{\*x}_{i, t-T:t-1} $, $ \bar{\*P}_{i, t-T:t-1}$.
\end{lemma}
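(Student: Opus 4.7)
The plan is to verify that the proposed values constitute a fixed point of the ADMM iterations \eqref{eqn:ADMM_update_p} and \eqref{eqn:ADMMupdate}; since the problem \eqref{eqn:dist_prob} is convex with linear equality constraints, any such fixed point is a saddle point of the augmented Lagrangian $L_\rho$, so this suffices.

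The first step is to check primal feasibility and the dual update. Because $\hat{\*x}_i^{(k)} = \hat{\*x}_{t-T:t}$ is the same for every $i \in \+V^\prime_t$, the choice $\*r_{ij} = \hat{\*x}_{t-T:t} = \tfrac{1}{2}(\hat{\*x}_i + \hat{\*x}_j)$ satisfies both constraints $\*x_i = \*r_{ij}$ and $\*x_j = \*r_{ij}$ on every edge, and the consensus increment in \eqref{eqn:ADMM_update_p} vanishes, so the dual update leaves $\*p_i^{(k)}$ fixed. Substituting $\hat{\*x}_j^{(k)} = \hat{\*x}_{t-T:t}$ and the proposed $\*p_i^{(k+1)}$ into the right-hand side of \eqref{eqn:ADMMupdate}, the dual term cancels $\*H_i^\top \*W_i^{-1} \*z_i$ against $\*H_i^\top \*W_i^{-1} \*H_i \hat{\*x}_{t-T:t}$ and leaves $(\*H_i^\top \*W_i^{-1} \*H_i + 2\rho|\+N_i|\*I)\hat{\*x}_{t-T:t}$, matching the coefficient of $\hat{\*x}_i^{(k+1)}$ on the left; the primal minimization therefore also returns the proposed value.

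The second step is to establish that $\hat{\*x}_{t-T:t}$ in \eqref{eqn:primal_opt} really is the centralized MAP estimate of \eqref{eqn:MAP_estimator_soln_mean}, which is what makes the proposed dual expression \eqref{eqn:dual_opt} consistent with the dual-sum invariant of the iterations. Using $\bar{\*x}_{i,t-T:t-1} = \bar{\*x}_{t-T:t-1}$ together with $\sum_i \bar{\*P}^{-1}_{i,t-T:t-1} = \bar{\*P}^{-1}_{t-T:t-1}$, the column-concatenation structure of $\*y_t$, $\*C_t$, and $\*R_t$, and the $1/|\+V^\prime_t|$ rescaling of the dynamics block in \eqref{eqn:dist_prob}, I would show termwise that $\sum_{i \in \+V^\prime_t} J_i(\*x) = J(\*x)$ from \eqref{eqn:cent_cost_func}. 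This yields the block identities $\sum_i \*H_i^\top \*W_i^{-1} \*H_i = \*H_t^\top \*W_t^{-1} \*H_t$ and $\sum_i \*H_i^\top \*W_i^{-1} \*z_i = \*H_t^\top \*W_t^{-1} \*z_t$. Summing the proposed $\*p_i^{(k)}$ over $i$ and applying the centralized normal equation $\*H_t^\top \*W_t^{-1} \*H_t \hat{\*x}_{t-T:t} = \*H_t^\top \*W_t^{-1} \*z_t$ then gives $\sum_{i \in \+V^\prime_t} \*p_i^{(k)} = \*0$, which is precisely the invariant preserved by \eqref{eqn:ADMM_update_p} from the initialization $\*p_i^{(0)} = \*0$.

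I expect the bookkeeping in the second step, matching the local quadratic forms and information vectors to the centralized cost under the $1/|\+V^\prime_t|$ rescaling and the block structure of $\*H_t$, $\*W_t$, $\*z_t$, to be the main obstacle. Connectivity of $\+G^\prime_t$ plays a supporting role throughout: it is what ensures that the consensus driven by \eqref{eqn:ADMM_update_p} forces a unique common primal value across the network, so that the saddle point exhibited here is the one to which the iterations actually converge.
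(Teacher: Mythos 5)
Your proposal is correct and takes essentially the same route as the paper: the paper also verifies first-order stationarity of $L_\rho$ at the consensus point (minimization over the primal variables, feasibility in the duals) and reads off \eqref{eqn:dual_opt} by substituting \eqref{eqn:primal_opt} into the primal update \eqref{eqn:ADMMupdate}, which is exactly your fixed-point check. Your second step—explicitly verifying $\sum_{i}J_i = J$ under the prior-splitting assumptions so that $\hat{\*x}_{t-T:t}$ is the centralized minimizer and $\sum_i \*p_i^{(k)} = \*0$—is left implicit in the paper's proof, so your write-up is, if anything, slightly more complete.
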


\begin{proof}
	The Hessian of $L_\rho$ with respect to the primal variables $\hat{\*x}_i \: \forall i \in \+V^\prime_t$ is positive definite. Observing that
	\begin{equation*}
	\left.\frac{\partial L_\rho}{\partial \hat{\*x}_i} \right|_{\hat{\*x}_i = \hat{\*x}_{t-T:t} } = \frac{\partial}{\partial \hat{\*x}_i} J(\hat{\*x}_{t-T:t}) \\
	= \frac{\partial}{\partial \hat{\*x}_i} J(\hat{\*x}_{t-T:t}) = 0
	\end{equation*}
for each $i \in \+V^\prime_t$, $L_\rho$ is minimized with respect to the primal variables at $\hat{\*x}_i  = \hat{\*x}_{t-T:t} \: {\forall i \in \+V^\prime_t}$. Substituting the primal solution into the dual update, we see that $\frac{\partial L_\rho}{\partial \*p_i} = 0$. Substituting \eqref{eqn:primal_opt} into \eqref{eqn:ADMMupdate} yields \eqref{eqn:dual_opt}.
\end{proof}
In other words, the network can minimize \eqref{eqn:cent_cost_func} in a fully distributed manner using only independent measurements and local communication.  By decomposing the centralized problem according to \eqref{eqn:dist_prob}, each estimate $\hat{\*x}_{i}$ converges to the solution of \eqref{eqn:MAP_estimator_soln_mean}. A key assumption, however, is the decomposability of prior information, \textit{i.e.}, ${\sum_{i \in \+V_t^\prime} \bar{\*P}^{-1}_{i, t-T:t-1} = \bar{\*P}^{-1}_{ t-T:t-1}}$.
Given that the distributed prior inverse covariances sum to the centralized prior inverse covariances, then the distributed posterior inverse covariances (where $\hat{\*P}_{t-T:t}$ is the Hessian of the local cost function $J_i$) also sum to the centralized posterior inverse covariances.  However, this assumption weakens in implementing DRWT recursively.  In performing the marginalization step in which $\bar{\*P}_{t-T+1:t}$ is the ${t-T+1:t}$ block of $\hat{\*P}_{t-T:t}$, the distributed implementation is not exactly equivalent to the centralized. It is always true that ${(\sum_{i \in \+V_t^\prime} \bar{\*P}^{-1}_{i, t-T:t-1})^{-1} \ge \bar{\*P}_{t-T:t-1}}$. Consequently, the distributed marginalization is conservative with respect to the centralized solution. The conservativeness of the estimated covariance is a feature of other distributed algoithms as well---as Figure \ref{fig:mc-abs-error} shows, the CKF has an even more conservative covariance estimate.  Therefore, while DRWT remains an unbiased estimator, it does not exactly replicate the centralized covariance in its recursive implementation, as the prior mean is under-weighted.  Lemma \ref{lem:SP} holds, with the modification that the saddle point is the solution to a centralized optimization problem with a potentially overestimated prior covariance. As we show in Sec.~\ref{sec:simulations}, this effect is minimal in practice.

Finally, we propose a ``hand-off'' protocol by which sensor $i$ removes itself from estimating a target after not directly observing it in the $T$ most recent timesteps.  If there exists ${j \in \+N_{i, t} \cap \+V^\prime_{t+1}}$ (\textit{i.e.}, neighbor $j$ is continuing to estimate the target), then $i$ transfers the Hessian of its local cost function to a single neighbor $j$ at the end of the ADMM iterations. Sensor $j$ fuses the new information matrix with its own, thereby preserving the same joint information across the entire network. Algorithm \ref{alg:DRWT}
summarizes DRWT, including the hand-off protocol.

\begin{algorithm}[t]
	\caption{Distributed Rolling Window Tracking}\label{alg:DRWT}
	\begin{algorithmic}[1]
		\Function{DRWT}{$\bar{\*x}_{i, t-T:t-1}, \bar{\*P}_{i, t-T:t-1}, \*y_{i, t} \quad \forall i \in \+V^\prime_t$}
		\For{$i \in \+V^\prime_t$}
		\State $\hat{\*x}_{i, t-T:t}^{(0)} \gets \argmin_{\*x_{i, t-T:t}} J_i(\*x_{i, t-T: t})$
		\State $\*p_i^{(0)} \gets \*0$
		\State $\hat{\*P}_{i, t-T:t} \gets \left(\*H_{i, t}^\top \*W_{i, t}^{-1} \*H_{i, t}\right)^{-1}$
		\EndFor
		\While{stopping criterion is unmet}
		\For{$i \in \+V^\prime_t$}
		\State $\*p_i^{(k+1)} \gets$ Equation \eqref{eqn:ADMM_update_p} \Comment{dual update}
		\State $\hat{\*x}_{i, t-T:t}^{(k+1)} \gets$ Equation \eqref{eqn:ADMM_update_x} \Comment{primal update}
		\EndFor
		\State $k \gets k+1$
		\EndWhile
		\For{$i \in \+V_t^\prime \notin \+V_{t+1}^\prime, j \in \+N_{i, t}\cap \+V_{t+1}^\prime$}
		\State $\hat{\*P}_{j, t-T:t} \gets \left(\hat{\*P}_{i, t-T:t}^{-1} + \hat{\*P}_{j, t-T:t}^{-1} \right)^{-1}$ \Comment{hand-off}
		\EndFor
		\State \Return $\hat{\*x}_{i, t-T:t}, \hat{\*P}_{i, t-T:t} \quad \forall i \in \+V^{\prime}_t$
		
		\EndFunction
	\end{algorithmic}
\end{algorithm}

After each communication round per timestep, sensor $i$ updates its estimate of the target's trajectory (\ref{eqn:ADMM_update_x}) by inverting the Hessian of its local objective function which requires ${O(n^{3}(T+1)^{3})}$ floating point operations (flops), posing a bottleneck for long window lengths. Here, we provide an efficient algorithm for performing this update in ${O(n(T+1))}$ flops rather than cubic complexity, without any matrix inversion. We factor the Hessian using Cholesky decomposition to obtain a lower triangular matrix $\*L_{\tau}$ for each ${\tau = t-T,\cdots,t}$ and compute $\pmb\sigma_{\tau}$ to update $\hat{\*x}_{i, t-1:t}$ using forward and backward iterations, reminiscent of the Kalman smoothing procedure. The Cholesky decomposition of the Hessian takes ${O(n(T+1))}$ flops, along with the forward and backward iterations. We present the algorithm in Algorithm \ref{alg:DRWT_smoother}.


\begin{algorithm}[t]
	\caption{DRWT Primal Update}\label{alg:DRWT_smoother}
	\begin{algorithmic}[1]
		\Function{PrimalUpdate}{$\*p_{i}^{(k)}, \hat{\*x}_{i}, \hat{\*x}_{j} \: \forall j \in \+{N}_{i,t}$}
		    		
    		\State $\pmb\alpha_{\tau} \coloneqq \frac{\rho}{2} \sum_{j \in \+{N}_{i}} \left(\hat{\*x}_{i,\tau}^{(k)} + \hat{\*x}_{j,\tau}^{(k)} \right) - \frac{1}{2}\*p_{i,\tau}^{(k)}$
    		\State $\pmb\gamma_{\tau} \coloneqq \frac{1}{\card{\+{V}_{t}^{\prime}}}\*Q_{\tau-1}^{-1}+ \rho \card{\+{N}_{i}}$
    		
		\State \textbf{initialization}
		\Indent
    		\State $ \*\Phi_{T-t} \gets \hat{\*P}_{i,T-t}^{-1}\bar{\*x}_{i,T-t} + \rho\card{\+{N}_{i}}$
    		\State $ \pmb \beta_{i,T-t} \gets \hat{\*P}_{i,T-t}^{-1}\bar{\*x}_{i,T-t} +  \pmb\alpha_{T-t}$
		\EndIndent
		\State \textbf{forward pass}
		\Indent
		\For{$\tau = t-T+1,\cdots,t$}
    		\State $\*L_{\tau-1}\*L_{\tau-1}^\top \gets \*\Phi_{\tau-1} + \frac{1}{\card{\+{V}_{t}^{\prime}}}\*{A}^{\top}_{\tau-1} \*Q_{\tau-1}^{-1} \*A_{\tau-1}$
    		\State $\*L_{\tau,\tau-1}\*L_{\tau-1}^\top \gets -\frac{1}{\card{\+{V}_{t}^{\prime}}} \*Q_{\tau-1}^{-1} \*A_{\tau-1}$
    		\State $\*L_{\tau-1}\pmb\sigma_{\tau-1} \gets \pmb\beta_{i,\tau-1}$
    		\State $\*\Phi_{\tau} \gets -\*L_{\tau,\tau-1}\*L_{\tau,\tau-1}^\top + \hat{\*P}_{i,\tau}^{-1}  + \pmb\gamma_{\tau}$
    		\State $\pmb\beta_{i,\tau} \gets -\*L_{\tau,\tau-1}\pmb\sigma_{\tau-1} + \hat{\*P}_{i,\tau}^{-1}\bar{\*x}_{i,\tau} + \pmb\alpha_{\tau}$	
		
		\EndFor
		\State $\*L_{t}\*L_{t}^\top \gets -\*L_{t,t-1}\*L_{t,t-1}^\top + \*C_{t}^{\*T}\*R_{i,t}^{-1}\*C_{t} + \pmb\gamma_{t}$
		\State $\*L_{t}\pmb\sigma_{t} \gets  -\*L_{t,t-1}\pmb\sigma_{t-1} + \*C_{t}^{\*T}\*R_{i,t}^{-1}\*y_{i,t} + \pmb\alpha_{t}$
		\State $ \hat{\*x}_{i,t}^{(k+1)} \gets -\*L_{t}^{-\top} \pmb\sigma_{t} $
		\EndIndent
		\State \textbf{backward pass}
		\Indent
		\For{$\tau = t,\cdots,t-T+1$}
            \State $ \hat{\*x}_{i,\tau-1}^{(k+1)} \gets -\*L_{\tau-1}^{-\top}\left(\*L_{\tau,\tau-1}\hat{\*x}_{i,\tau}^{(k+1)} + \pmb\sigma_{\tau-1}\right) $
		\EndFor
		\EndIndent
        \State \Return $\hat{\*x}_{i, t-T:t}^{(k+1)}$
		\EndFunction
	\end{algorithmic}
\end{algorithm}

\section{Simulation Results}
\label{sec:simulations}

\subsection{Performance Comparison}
We compare the performance of the DRWT method  in Algorithm \ref{alg:DRWT} to the CKF in a distributed estimation problem involving a static network with $\vert\+V\vert = 100$ and $\vert \+E\vert = 400$. All sensors acquire noisy measurements of the target at each time step, and perform DRWT with $T = 1$. During each estimation phase, the same bandwidth limitations are imposed on the CKF and DRWT. We benchmark both distributed methods against the centralized MAP estimate.

Results from 2000 Monte Carlo simulations of this scenario show that DRWT method outperforms the CKF. DRWT is significantly more communication-efficient, as sensors communicate only their target estimates. From Figure \ref{fig:comp-convergence},
DRWT yields better convergence to the centralized estimate compared to the CKF method as a function of the total number of communication bits per node.
As Figure \ref{fig:mc-abs-error} shows, the improved convergence of the DRWT contributes to improved estimation performance over entire trajectories.  The estimated trajectories and covariances of the DRWT method closely match the centralized estimates.  The CKF does not track the centralized estimate as closely and is also more significantly overconservative in its estimate.

\begin{figure}[thpb]
    \centering
    \includegraphics[width=0.9\columnwidth]{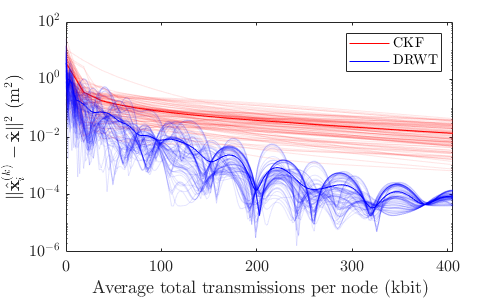}
    \caption{Convergence of distributed estimation methods to the centralized estimate as a function of bits of communication passed on a 100 node, 400 edge network for a single timestep's estimate.}
    \label{fig:comp-convergence}
\end{figure}

\begin{figure}[thpb]
    \centering
    \includegraphics[width=0.9\columnwidth, trim=0 0 0 0, clip]{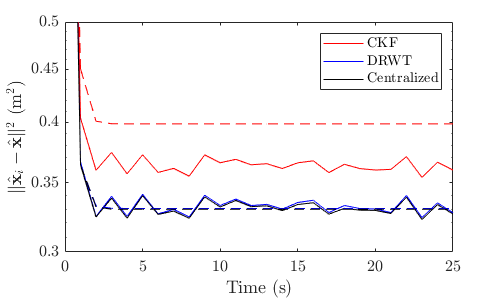}
    \caption{Mean squared error of estimation methods on a 100 node, 400 edge network with respect to ground truth, averaged over 4000 Monte Carlo simulations. Solid lines show the indicate mean squared error, while dashed lines represent estimated covariances, computed as $\text{trace}(\hat{\*P})$.}
    \label{fig:mc-abs-error}
\end{figure}

\subsection{CARLA Simulations}
We demonstrate our algorithm in a scenario involving a network of 50 sensor vehicles and 50 target vehicles within CARLA \cite{Dosovitskiy17}, a simulation test-bed for autonomous driving systems. For the simulation trials, each sensor vehicle is equipped with a forward and a backward-facing camera, each with a $90^\circ$ field of view. As shown in Figure \ref{fig:carla-frame}, sensor vehicles acquire semantic segmentation and depth images at $4 \: \text{Hz}$. The sensing radius of the vehicles is limited to 100m.

\begin{figure}[thpb]
    \centering
    \includegraphics[width=0.9\columnwidth]{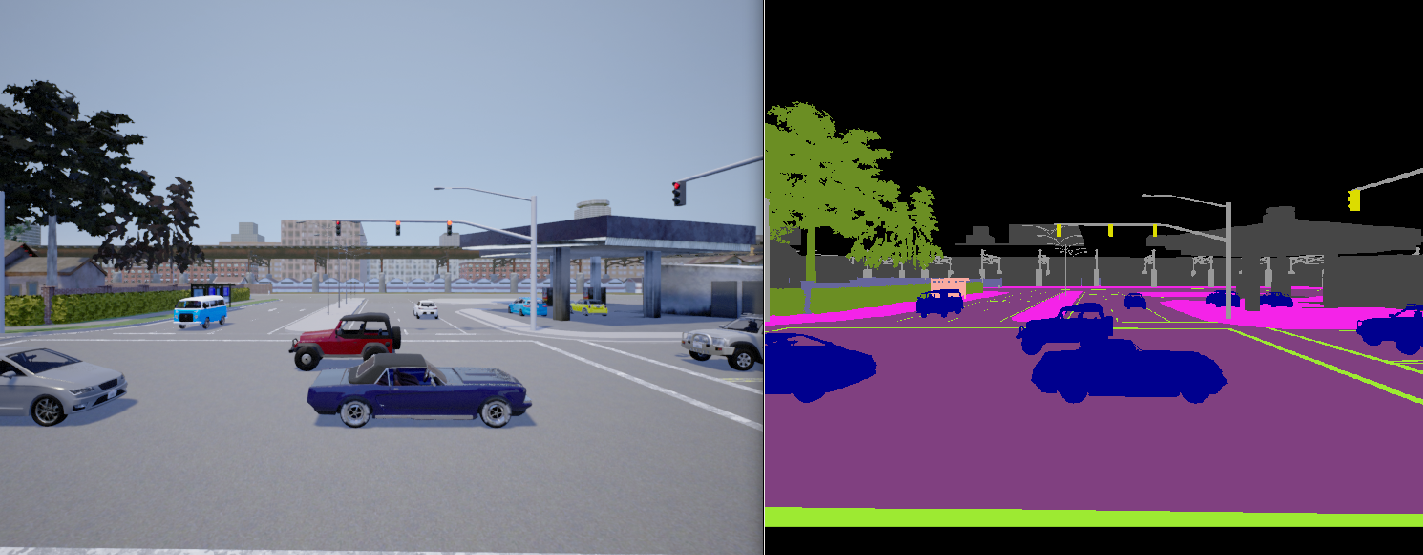} 
    \caption{CARLA frame showing raw and segmented camera images.}
    \label{fig:carla-frame}
\end{figure}

The relative position of each target vehicle is deduced from the depth and segmentation images and the camera's projection matrix. Each sensor uses its odometry information to transform the relative position of the target into the global coordinate frame corresponding to the measurement used by the vehicle in DRWT. The sensor estimates trajectories of $T = 5\text{s}$ in length. For this simulation, we assume that the target labeling is known \textit{a priori}. The communication network between sensor vehicles is modeled as a disk graph with a $200\text{m}$ radius and is updated at $4 \: \text{Hz}$. DRWT uses a simple double integrator model for the vehicle dynamics.

\begin{figure}[thpb]
    \centering
    \includegraphics[width=\columnwidth]{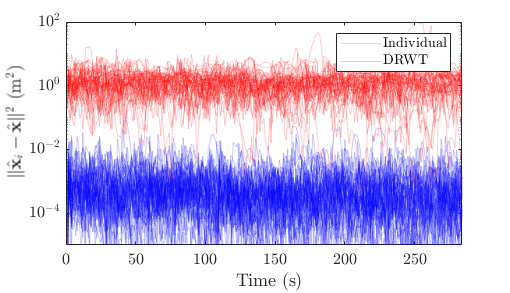} 
    \caption{Mean squared error to the centralized estimate across the full trajectories of all 50 targets. Red lines are the positional estimate errors for each individual sensor (with no communication), and the blue lines are for the DWRT positional estimates.}
    \label{fig:log-error-carla}
\end{figure}

\begin{figure}[thpb]
    \centering
    \includegraphics[width=\columnwidth]{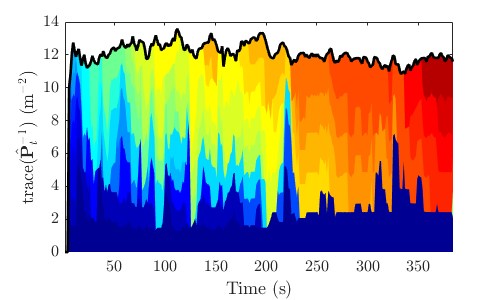}
    \caption{The sum of the traces of information matrices maintained by sensor vehicles using DRWT for a single target in a CARLA simulation. Each colored band represents the information of one sensor. Although any one sensor possesses only a fraction of the joint information, the sum over the network closely matches the information of a centralized estimator. Spikes in individual bands correspond to execution of the hand-off procedure.}
    \label{fig:info-tr-carla}
\end{figure}

Figure \ref{fig:log-error-carla} shows the mean squared error of the estimated target trajectories of all target vehicles for all the sensor vehicles with respect to the centralized trajectory estimate. Collaborative target tracking using DRWT significantly outperforms the estimates made by any single agent. Increasing the number of iterations of DRWT in each estimation round can further reduce the remaining error.

Figure \ref{fig:info-tr-carla} shows how the information (represented as the trace of the inverse covariance) corresponding to a given target is apportioned across the network. As the set of sensors tracking a target changes in time, the hand-off procedure enables their joint information to closely match the information of the centralized estimate.

\section{Conclusion}
\label{sec:conclusions}
The DRWT algorithm enables a fleet of autonomous vehicles to track other vehicles in urban environment in the presence of occlusions. In this method, each sensor-equipped vehicle estimates the target's state over a rolling window, leading to a scalable algorithm that can be parallelized to multiple targets. We show that DRWT converges to the centralized estimate even with less communication bits per node. Future work will focus on target tracking by vehicles with non-linear dynamics and non-linear sensors such as radar and lidar.

\bibliographystyle{IEEEtran}
\bibliography{Biblio}

\end{document}